\icmltitlerunning{
Practical Privacy Filters and Odometers
}
\newtheorem{theorem}{Theorem}
\newtheorem{corollary}{Corollary}
\def\E{\mathbb{E}}
\def\N{\mathbb{N}}
\def\cD{\mathcal{D}}
\def\cR{\mathcal{R}}
\def\cS{\mathcal{S}}
\def\edp{\epsilon_{\textrm{DP}}}
\def\erdp{\epsilon_{\textrm{RDP}}}
\def\PASS{$\textrm{PASS}$}
\newcommand\mPASS{\ensuremath{\textrm{PASS}}}
\newcommand\truncf{\ensuremath{\textrm{trunc}_{\leq f}}}
\newcommand\loss{\ensuremath{\textrm{Loss}}}
\newcommand\tloss{\ensuremath{\widetilde{\textrm{Loss}}}}
\begin{document}

\twocolumn[
\icmltitle{
Practical Privacy Filters and Odometers with R\'enyi Differential Privacy \\
and Applications to Differentially Private Deep Learning
}

\icmlsetsymbol{equal}{*}

\begin{icmlauthorlist}
\icmlauthor{Mathias L\'ecuyer}{msrny}
\end{icmlauthorlist}

\icmlaffiliation{msrny}{Microsoft Research, New York City}

\icmlcorrespondingauthor{Mathias Lecuyer}{mathias.lecuyer@gmail.com}

\vskip 0.3in
]

\printAffiliationsAndNotice{}  

\begin{abstract}
Differential Privacy (DP) is the leading approach to privacy preserving  deep learning.
As such, there are multiple efforts to provide drop-in integration of DP into popular frameworks.
These efforts, which add noise to each gradient computation to make it DP, rely on composition theorems to bound the total privacy loss incurred over this sequence of DP computations.

However, existing composition theorems present a tension between efficiency and flexibility.
Most theorems require all computations in the sequence to have a predefined DP parameter, called the privacy budget. This prevents the design of training algorithms that adapt the privacy budget on the fly, or that terminate early to reduce the total privacy loss.
Alternatively, the few existing composition results for adaptive privacy budgets provide complex bounds on the privacy loss, with constants too large to be practical.

In this paper, we study DP composition under adaptive privacy budgets through the lens of R\'enyi Differential Privacy, proving a simpler composition theorem with smaller constants, making it practical enough to use in algorithm design.
We demonstrate two applications of this theorem for DP deep learning: adapting the noise or batch size online to improve a model's accuracy within a fixed total privacy loss, and stopping early when fine-tuning a model to reduce total privacy loss.
\end{abstract}

\section{Introduction}

Performing statistical analyses or training machine learning models have been shown to leak information about individual data points used in the process.
Differential Privacy (DP) is the leading approach to prevent such privacy leakage, by adding noise to any computation performed on the data such that the privacy loss resulting their result is provably bounded.
Due to DP's strong privacy guarantees, multiple efforts exist to integrate it into libraries, including for statistical queries \cite{google-dp,opendp}, machine learning \cite{diffprivlib}, and deep learning \cite{tensorflow-privacy,opacus}.
The DP analysis in these libraries rely on composing the results of a sequence of DP computations, consisting for instance in a sequences of statistical queries, or a sequence of mini-batches of gradients in deep learning.
Each computation in the sequence can be seen as consuming some privacy budget, that depends on the size of the noise added to the computation's result.
Composition theorems are then applied to bound to the total privacy budget, or privacy loss, used over the sequence.

However efficient DP composition theorems require that all computations in the entire sequence have a privacy budget fixed a priori.
This creates a mismatch between the theory, and practical requirements.
Indeed, users who interact with a dataset through statistical queries or develop machine learning models will typically want to adapt the next query or model's DP budget allocation based on previous results, resulting in an adaptive behavior.
At the algorithm level this mismatch is visible in the APIs developed for DP deep learning, which track DP computations as they happen during training. Users are responsible for avoiding early stopping or other types of parameter adaptation.
This restriction reduces the design space for DP deep learning algorithms, ruling out early stopping to reduce overall privacy loss, DP budgets that adapt during training to improve the final performance, or online architecture search to adapt the complexity of the model to the data within DP constraints.

The main study of adaptive composition \cite{10.5555/3157096.3157312} defines two relevant interaction models that yield different asymptotic rates for composition theorems.
The {\em privacy filter} model enforces an a priori bound on the total privacy loss, and can reach the same rate as non-adaptive composition.
The {\em privacy odometer} model is more costly, but more flexible as it provides a running upper-bound on the sequence of DP computations performed so far, which can stop at any time. However there is a small penalty to pay in asymptotic rate.
\cite{10.5555/3157096.3157312} provides composition theorems with optimal rate for both models, but the bounds are complex and with large constants, making them impractical.
A recent paper, \cite{feldman2020individual}, shows improved filters but unsatisfactory odometers.

In this paper, we study both privacy filters and odometers through the lens of R\'enyi Differential Privacy (RDP).
RDP in an analysis tool for DP composition that enables intuitive privacy loss tracking and provides efficient composition results under non-adaptive privacy budget. It has become a key building block in DP deep learning libraries \cite{tensorflow-privacy,opacus}.
We show that RDP can be used to provide privacy filters without incurring any cost for adaptivity.
We build on this result to construct privacy odometers that remove limitations of previously known results, in particular in regimes relevant for DP deep learning.

We demonstrate two applications of these theoretical results when training DP deep learning models.
First, we leverage privacy filters to show that even basic policies to adapt on the fly the noise added to gradient computations, as well as the batch size, yield improved accuracy of more than $2.5\%$ within a fixed privacy loss bound on CIFAR-10.
Second, we demonstrate that in a fine-tuning scenario our privacy odometer enables large reductions in total privacy budget using adaptivity and early-stopping.

\section{Privacy Background}
\label{sec:privacy-background}

This section provides the necessary background on DP and composition, and introduces some notation.
Specific results on adaptive composition are discussed and compared to our contributions throughout the paper.

\subsection{Differential Privacy}

\paragraph{Definition.} We first recall the definition of (approximate) Differential Privacy \cite{dp-book}. A randomized mechanisms $M : \cD \rightarrow \cR$ satisfies $(\epsilon, \delta)$-DP when, for any neighboring datasets $D$ and $D'$ in $\cD$, and for any $\cS \subset \cR$:
\[
  P(M(D) \in \cS) \leq e^\epsilon P(M(D') \in \cS) + \delta .
\]
This definition depends on the notion of neighboring datasets, which can be domain specific but is typically chosen as the addition or removal of one data point. Since the neighboring relation is symmetric, the DP inequality also is.
A smaller privacy budget $\epsilon>0$ implies stronger guarantees, as one draw from the mechanism's output gives little information about whether it ran on $D$ or $D'$.
When $\delta = 0$, the guarantee is called pure DP, while $\delta > 0$ is a relaxation of pure DP called approximate DP.

A useful way of rephrasing the DP guarantee is to define the privacy loss as, for $m \in \cR$:
\[
  \loss(m) = \log \Big( \frac{P\big( M(D) = m \big)}{P\big( M(D') = m \big)} \Big) .
\]
If with probability at least $(1-\delta)$ over $m \sim M(D)$:
\begin{equation}
  \label{eq:dp-as-privacy-loss-bound}
  | \loss(m) | \leq \epsilon ,
\end{equation}
then the mechanism $M$ is $(\epsilon, \delta)$-DP.
We can see in this formulation that $\epsilon$ is a bound on the privacy loss.
Approximate DP allows for a $\delta > 0$ failure probability on that bound.

\paragraph{Composition.} Composition is a core DP property. Composition theorems bound the total privacy loss incurred over a sequence of DP computations. This is useful both as a low level tool for algorithm design, and to account for the privacy loss of repeated data analyses performed on the same dataset.
Basic composition states that a sequence $M_i$ of $(\epsilon_i, \delta_i)$-DP mechanisms is $(\sum \epsilon_i, \sum \delta_i)$-DP.
While not efficient, it is simple and provides an intuitive notion of additive privacy budget consumed by DP computations.

Strong composition provides a tighter bound on privacy loss, showing that the same sequence is $(\epsilon_g, \delta_g)$-DP with $\epsilon_g = \sqrt{2 \log(\frac{1}{\hat\delta}) \sum \epsilon_i^2} + \sum \epsilon_i(e^\epsilon_i - 1)$ and $\delta_g = \hat\delta + \sum \delta_i$.
This result comes with its share of drawbacks though. Each mechanism $M_i$ comes with its own trade-off between $\epsilon_i$ and $\delta_i$, and there is no practical way to track and optimally compose under these trade-offs.
This has led to multiple special purpose composition theorems, such as the moment accountant for DP deep learning \cite{abadi2016deep}.
Strong composition results also require the sequence of DP parameters to be fixed in advance.

\subsection{R\'enyi Differential Privacy}
RDP \cite{8049725} is also a relaxation of pure DP that always implies approximate DP, though the converse in not always true.

\paragraph{R\'enyi Differential Privacy (RDP).} RDP expresses its privacy guarantee using the R\'enyi divergence:
\[
  D_\alpha(P || Q) \triangleq \frac{1}{\alpha - 1} \log \E_{x \sim Q} \Big( \frac{P(x)}{Q(x)} \Big)^\alpha ,
\]
where $\alpha \in ]1, +\infty]$  is the order of the divergence.
A randomized mechanism $M$ is $(\alpha, \epsilon)$-RDP if:
\[
  D_\alpha(M(D) || M(D')) \leq \epsilon .
\]

\paragraph{RDP composition.} A sequence of $(\alpha, \epsilon_i)$-RDP mechanisms is $(\alpha, \sum \epsilon_i)$-RDP.
We can see that RDP re-introduces an intuitive additive privacy budget $\epsilon$, at each order $\alpha$.

Tracking the RDP budget over multiple orders enables one to account for the specific trade-off curves of each specific mechanism in a sequence of computations. Indeed, for an $ (\alpha, \epsilon)$-RDP mechanism:
\begin{equation}
  \label{rdp-to-dp}
  P_{m \sim M(D)}\Big(|\loss(m)| > \epsilon + \frac{\log(1/\delta)}{\alpha-1}\Big) \leq \delta ,
\end{equation}
which in turn implies $(\epsilon + \frac{\log(1/\delta)}{\alpha-1}, \delta)$-DP.
Although RDP composition is similar to strong composition in the worst case, tracking privacy loss over the curve of R\'enyi orders often yields smaller privacy loss bounds in practice.

\section{Privacy Filters}
We first focus on privacy filters, which authorize the composition of adaptive DP budgets within an upper-bound on the privacy loss fixed a priori.
That is, the sequence of computations' budgets is not known in advance, and can depend on previous results, and the filter ensures that the upper-bound is always valid.

\subsection{The Privacy Filter Setup}
Complex sequences of interactions with the data, such as the adaptive privacy budget interactions we focus on in this work, can be challenging to express precisely.
Following \cite{10.5555/3157096.3157312}, we express them in the form of an algorithm that describes the power and constraints of an analyst, or adversary since it models worst case behavior, interacting with the data.
Algorithm \ref{filter-comp} shows this interaction setup for a privacy filter.

Specifically, an adversary $A$ will interact for $k$ rounds ---$k$ does not appear in the bounds and can be chosen arbitrarily large---, in one of two ``neighboring worlds'' indexed by $b \in \{0, 1\}$.
Typically, the two worlds will represent two neighboring datasets with an observation added or removed, but this formulation allows for more flexibility by letting $A$ chose two neighboring datasets at each round, as well as the DP mechanism to use and its RDP parameters. Each decision can be made conditioned on the previous results.

Our RDP filter $\textrm{FILT}_{\alpha, \erdp}$ of order $\alpha$ can \PASS~on a computation at any time, setting $\epsilon_i = 0$ and returning $\perp$ to $A$.
Sections \ref{sec:filter-anaysis} and \ref{sec:filter-renyi-curve} describe how to instantiate this filter to provide RDP and DP guarantees.
The final output of the algorithm is a view $V^b=(v_1, \ldots, v_k)$ of all results from the computations.
With a slight abuse of notation, we call $V^b_i$ the distribution over possible outcomes for the $i^{th}$ query, and $v_i \sim V^b_i$ an observed realization.

This way, we can explicitly write the dependencies of the $i^{th}$ query's distribution as $V^b_i(v_i | v_{\leq i-1})$, where $v_{\leq i}$ is the view of all realized outcomes up to, and including, the $i^{th}$ query.
Similarly, the joint distribution over possible views is $V^b(v_{\leq n})$.
The budget of the $i^{th}$ query, $\epsilon_i$ depends only on all queries up to the previous one, which we note explicitly as $\epsilon_i(v_{\leq i-1})$.
DP seeks to bound the information about $b$ that $A$ can learn from the results of its DP computations.
Following Equation \ref{eq:dp-as-privacy-loss-bound}, we want to bound with high probability under $v \sim V^0$ the privacy loss incurred by the sequence of computations $|\loss(v)| = \big| \log \big( \frac{V^0(v_{\leq n})}{V^1(v_{\leq n})} \big) \big|$.

\begin{algorithm}[H]
  \caption{$\textrm{FilterComp}(A,k,b, \alpha; \textrm{FILT}_{\alpha, \erdp})$}
  \label{filter-comp}
\begin{algorithmic}
  \FOR {$i = 1, \ldots, k$}
    \STATE $A = A(v_1, \ldots, v_{i-1})$ gives neighboring $D_i^{0}, D_i^{1}, \epsilon_i, \textrm{and~} M_i: \cD \rightarrow \cR$ an $(\alpha, \epsilon_i)$-RDP mechanism\;
    \IF {$\textrm{FILT}_{\alpha, \erdp}(\epsilon_1, \ldots, \epsilon_{i}, 0, \ldots, 0) = \textrm{PASS}$}
      \STATE $\epsilon_i = 0$\;
      \STATE $A$ receives $v_i = V^b_i = \perp$\;
    \ELSE
      \STATE $A$ receives $v_i \sim V^b_i = M_i(D_i^{b})$\;
    \ENDIF
  \ENDFOR
  \OUTPUT view $V^b=(v_1, \ldots, v_k)$.
\end{algorithmic}
\end{algorithm}

\subsection{R\'enyi Differential Privacy Analysis}
\label{sec:filter-anaysis}

We first analyse the filter $\textrm{FILT}_{\alpha, \erdp}(\epsilon_1, \ldots, \epsilon_{i}, 0, \ldots, 0)$ that will $\textrm{PASS}$ when:
\begin{equation}
  \label{priv-filter}
  \sum_i \epsilon_i > \erdp .
\end{equation}

The filter $\textrm{PASS}$es when the new query would cause the sum of all budgets used so far to go over $\erdp$.
We next show that this filter ensures that Algorithm \ref{filter-comp} is $(\alpha, \erdp)$-RDP.

\begin{theorem}[RDP Filter]
\label{prop:filter-composition}
The interaction mechanism from Algorithm \ref{filter-comp}, instantiated with the filter from Equation \ref{priv-filter}, is $(\alpha, \erdp)$-RDP. That is: $D_\alpha(V^0 || V^1) \leq \erdp$.
\end{theorem}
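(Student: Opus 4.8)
The plan is to expand the R\'enyi divergence $D_\alpha(V^0 \| V^1)$ via the chain-rule factorization of the sequentially generated views, and then to control the resulting product of per-round likelihood ratios with a supermartingale argument resting on two facts: each executed round contributes a factor bounded by its RDP budget, and the filter guarantees that the effective budgets never sum past $\erdp$.

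First I would factor the joint likelihood ratio. Since the view is produced one query at a time, $V^b(v_{\leq k}) = \prod_{i=1}^k V^b_i(v_i \mid v_{\leq i-1})$ for each $b \in \{0,1\}$, so setting $Z_i = V^0_i(v_i \mid v_{\leq i-1}) / V^1_i(v_i \mid v_{\leq i-1})$ gives
\[
  D_\alpha(V^0 \| V^1) = \frac{1}{\alpha-1}\log \E_{v \sim V^1}\Big[\textstyle\prod_{i=1}^k Z_i^\alpha\Big].
\]
The goal is therefore to show $\E_{v\sim V^1}\big[\prod_i Z_i^\alpha\big] \le e^{(\alpha-1)\erdp}$, which is exactly $D_\alpha(V^0\|V^1) \le \erdp$ for $\alpha > 1$.

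The central observation is that the filter's decision at round $i$, and hence the effective budget $\tilde\epsilon_i$ actually spent (equal to $\epsilon_i$ when the query runs and to $0$ when it \mPASS{}es), is a deterministic function of the past view $v_{\leq i-1}$: the proposed $\epsilon_i(v_{\leq i-1})$ and the earlier, possibly-zeroed budgets are all $v_{\leq i-1}$-measurable, so the test $\sum_{j\le i}\epsilon_j > \erdp$ is resolved before $v_i$ is drawn. Conditioned on $v_{\leq i-1}$, when the filter \mPASS{}es both worlds return $\perp$ and $Z_i = 1$; otherwise $V^b_i(\cdot\mid v_{\leq i-1}) = M_i(D_i^b)$ and the $(\alpha,\epsilon_i)$-RDP guarantee of $M_i$ gives $\E[Z_i^\alpha \mid v_{\leq i-1}] = e^{(\alpha-1)D_\alpha(M_i(D_i^0)\|M_i(D_i^1))} \le e^{(\alpha-1)\tilde\epsilon_i}$, so in both branches $\E[Z_i^\alpha \mid v_{\leq i-1}] \le e^{(\alpha-1)\tilde\epsilon_i}$. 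I would then introduce $W_i = \prod_{j\le i} Z_j^\alpha \, e^{-(\alpha-1)\sum_{j\le i}\tilde\epsilon_j}$ and verify it is a supermartingale for the filtration generated by $v_{\le i}$: because $W_{i-1}$ and $\tilde\epsilon_i$ are $v_{\leq i-1}$-measurable they factor out, leaving $\E[W_i \mid v_{\leq i-1}] = W_{i-1}\, e^{-(\alpha-1)\tilde\epsilon_i}\,\E[Z_i^\alpha\mid v_{\leq i-1}] \le W_{i-1}$. Iterating from $W_0 = 1$ yields $\E[W_k] \le 1$. Finally I would invoke the filter invariant: a query runs only when adding its budget keeps the running sum of effective budgets at most $\erdp$, so $\sum_i \tilde\epsilon_i \le \erdp$ holds deterministically, giving $e^{-(\alpha-1)\sum_i\tilde\epsilon_i} \ge e^{-(\alpha-1)\erdp}$. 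Combining, $\E\big[\prod_i Z_i^\alpha\big] \le e^{(\alpha-1)\erdp}\,\E[W_k] \le e^{(\alpha-1)\erdp}$, as required.

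I expect the main obstacle to be the adaptivity of the budgets: since each $\tilde\epsilon_i$ is itself a random function of the realized view, one cannot simply multiply per-round RDP guarantees as in the non-adaptive case. The resolution is precisely the measurability remark —- $\tilde\epsilon_i$ depends only on $v_{\leq i-1}$ —- which lets the budget be pulled out of the conditional expectation and absorbed into the supermartingale $W_i$; additional care is needed to treat the \mPASS{} rounds uniformly (via $Z_i = 1$ and $\tilde\epsilon_i = 0$) so that a single inequality covers both branches, and to confirm the deterministic filter invariant that caps the total effective budget.
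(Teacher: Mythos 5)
Your proof is correct, and it rests on the same two pillars as the paper's: the chain-rule factorization of the joint view distribution, and the conditional per-round bound $\E[Z_i^\alpha \mid v_{\leq i-1}] \leq e^{(\alpha-1)\epsilon_i(v_{\leq i-1})}$, justified precisely by your measurability observation that the budget (and the filter's decision) is fixed by $v_{\leq i-1}$ before $v_i$ is drawn. Where you genuinely differ is in how the filter constraint enters the argument. The paper runs a backward induction directly on the nested integrals: it bounds the innermost integral using the filter in \emph{residual} form, $\epsilon_n \leq \erdp - \sum_{i=1}^{n-1}\epsilon_i$, and then telescopes outward, the factor $e^{(\alpha-1)(\erdp - \sum_{i\leq j}\epsilon_i)}$ absorbing each $e^{(\alpha-1)\epsilon_j}$ in turn until only $e^{(\alpha-1)\erdp}$ remains. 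You instead normalize the running product by the realized spent budget to form the supermartingale $W_i$, obtain $\E[W_k]\leq 1$ by the tower property, and invoke the deterministic invariant $\sum_i \tilde\epsilon_i \leq \erdp$ once, at the very end, outside the probabilistic argument. The two are equivalent rearrangements of the same induction, but the packaging buys different things: your formulation cleanly decouples adaptive composition from the filter invariant and is essentially the style of \cite{feldman2020individual}, whose filter result the paper notes coincides with this theorem, and it extends more readily to variants (e.g.\ individual accounting, or filters whose cap is itself data-dependent); the paper's explicit integral manipulation deliberately mirrors the classical non-adaptive RDP composition proof, and that concrete template is what it reuses for the odometer results in Section \ref{sec:odometers}. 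Your uniform treatment of \mPASS{} rounds ($Z_i = 1$, $\tilde\epsilon_i = 0$) matches the paper's convention of zeroing $\epsilon_i$ on those rounds, so there is no gap there either.
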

\begin{proof}
We start by writing the integral for $D_\alpha(V^0 || V^1)$ as:
\begin{align*}
  & e^{(\alpha - 1) D_\alpha( V^0 || V^1 )} \\
  &  = \int_{\cR_1 \ldots \cR_n} \big( V^0(v_{\leq n}) \big)^\alpha \big( V^1(v_{\leq n}) \big)^{1-\alpha} dv_1 \ldots dv_n \\
  &  = \int_{\cR_1} \big( V_1^0(v_1) \big)^\alpha \big( V_1^1(v_1) \big)^{1-\alpha} \ldots \\
  &    \int_{\cR_n} \big( V_n^0(v_n | v_{\leq n-1}) \big)^\alpha \big( V_n^1(v_n | v_{\leq n-1}) \big)^{1-\alpha} dv_n \ldots dv_1 .
\end{align*}

Focusing on the inner-most integral, we have that:
\begin{align*}
  & \int_{\cR_n} \big( V_n^0(v_n | v_{\leq n-1}) \big)^\alpha \big( V_n^1(v_n | v_{\leq n-1}) \big)^{1-\alpha} dv_n \\
  & \leq e^{(\alpha - 1) \epsilon_n(v_{\leq n-1})} \leq e^{(\alpha - 1)  \big(\erdp - \sum_{i=1}^{n-1} \epsilon_i(v_{\leq i-1}) \big)} ,
\end{align*}
  where the first inequality follows because $M_n$ is $\big( \alpha, \epsilon_n(v_{\leq n-1}) \big)$-RDP, and the second because the privacy filter enforces that $\sum_{i=1}^{n} \epsilon_i \leq \erdp \Rightarrow \epsilon_n \leq \erdp - \sum_{i=1}^{n-1} \epsilon_i$, since $\epsilon_i \geq 0$.

  Note that now, the inner most integral is bounded by a function of $v_{\leq n-2}$ and not  $v_{n-1}$. Thus, we can write:
\begin{align*}
  & e^{(\alpha - 1) D_\alpha( V^0 || V^1 )} \\
  & \leq \int_{\cR_1} \big( V_1^0(v_1) \big)^\alpha \big( V_1^1(v_1) \big)^{1-\alpha} \ldots \\
       & \int_{\cR_{n-2}} \big( V_{n-2}^0(v_{n-2} | v_{\leq n-3}) \big)^\alpha \big( V_{n-2}^1(v_{n-2} | v_{\leq n-3}) \big)^{1-\alpha} \\
       & \Big\{ e^{(\alpha - 1)  \big(\erdp - \sum_{i=1}^{n-1} \epsilon_i(v_{\leq i-1}) \big)} \\
       & \int_{\cR_{n-1}} \big( V_{n-1}^0(v_{n-1} | v_{\leq n-2}) \big)^\alpha \big( V_{n-1}^1(v_{n-1} | v_{\leq n-2}) \big)^{1-\alpha} \\
       & dv_{n-1} \Big\} dv_{n-2} \ldots dv_1 \\
  & \leq \int_{\cR_1} \big( V_1^0(v_1) \big)^\alpha \big( V_1^1(v_1) \big)^{1-\alpha} \ldots \\
       & \int_{\cR_{n-2}} \big( V_{n-2}^0(v_{n-2} | v_{\leq n-3}) \big)^\alpha \big( V_{n-2}^1(v_{n-2} | v_{\leq n-3}) \big)^{1-\alpha} \\
       & \Big\{ e^{(\alpha - 1)  \big(\erdp - \sum_{i=1}^{n-1} \epsilon_i(v_{\leq i-1}) \big)}
                e^{(\alpha - 1)  \epsilon_{n-1}(v_{\leq n-2})} \Big\} \\
       & dv_{n-2} \ldots dv_1 \\
  & = \int_{\cR_1} \big( V_1^0(v_1) \big)^\alpha \big( V_1^1(v_1) \big)^{1-\alpha} \ldots \\
       & \int_{\cR_{n-2}} \big( V_{n-2}^0(v_{n-2} | v_{\leq n-3}) \big)^\alpha \big( V_{n-2}^1(v_{n-2} | v_{\leq n-3}) \big)^{1-\alpha} \\
       & \Big\{ e^{(\alpha - 1)  \big(\erdp - \sum_{i=1}^{n-2} \epsilon_i(v_{\leq i-1}) \big)} \Big\} dv_{n-2} \ldots dv_1 \\
  &  \leq e^{(\alpha - 1)  \erdp} ,
\end{align*}
where the first inequality takes the previous bound out of the integral as it is a constant w.r.t. $v_{n-1}$, the second inequality comes from $M_{n-1}$ being $\big( \alpha, \epsilon_{n-1}(v_{\leq n-2}) \big)$-RDP, and the equality combines the two exponentials and cancels $\epsilon_{n-1}$ in the sum. As we can see, the term in the brackets is now only a function of $v_{\leq n-3}$ and not  $v_{n-2}$.
The last inequality recursively applies the same reasoning to each integral, removing all the $\epsilon_i$ in turn. Taking the log of each side of the global inequality concludes the proof.
\end{proof}

\subsection{Composing Over the R\'enyi Curve}
\label{sec:filter-renyi-curve}

A powerful feature of RDP is to be able to compose mechanisms over their whole $(\alpha, \epsilon)$ curve.
Since for many mechanisms this curve is not linear, it is unclear in advance which $\alpha$ would yield the best final $(\epsilon, \delta)$-DP guarantee through Equation \ref{rdp-to-dp}.
This composition ``over all $\alpha$s'' is important, as it enables RDP's strong composition results.

We can extend our privacy filter and Theorem \ref{prop:filter-composition} to this case. Consider a (possibly infinite) sequence of $\alpha$ values to track, noted $\Lambda$, and an upper bound $\erdp(\alpha)$ for each of these values.
In Algorithm \ref{filter-comp}, the mechanism $M_i$ is now $\big(\alpha, \epsilon_i(\alpha)\big)$-RDP for  $\alpha \in \Lambda$.
The filter $\textrm{FILT}_{\alpha, \erdp(\alpha)}\big(\epsilon_1(\alpha), \ldots, \epsilon_{i}(\alpha), 0, \ldots, 0\big)$ will now \PASS~when:
\begin{equation}
  \label{priv-filter-alpha}
  \forall \alpha \in \Lambda, \sum_i \epsilon_i(\alpha) > \erdp(\alpha) .
\end{equation}

\begin{corollary}[RDP Filter over the R\'enyi Curve]
\label{prop:filter-renyi-curve-composition}
The interaction mechanism from Algorithm \ref{filter-comp}, instantiated with the filter from Equation \ref{priv-filter-alpha} is such that $\exists \alpha: D_\alpha(V^0 || V^1) \leq \erdp(\alpha)$.
\end{corollary}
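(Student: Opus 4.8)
The plan is to reduce the statement to the single-order argument of Theorem~\ref{prop:filter-composition} and then account for the extra freedom introduced by tracking a whole family $\Lambda$ of orders. First I would fix an arbitrary $\alpha \in \Lambda$ and re-run the peeling computation of Theorem~\ref{prop:filter-composition} verbatim on $e^{(\alpha-1) D_\alpha(V^0 || V^1)}$. Inspecting that proof, the \emph{only} place the filter is used is to guarantee, along every realized view $v$, the cumulative-budget invariant $\sum_i \epsilon_i(\alpha)(v_{\leq i-1}) \leq \erdp(\alpha)$, which is exactly what lets each peeled factor $e^{(\alpha-1)\epsilon_i(\alpha)}$ telescope into the constant $e^{(\alpha-1)\erdp(\alpha)}$. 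Hence proving the corollary amounts to exhibiting one order $\alpha \in \Lambda$ for which this invariant can be made to hold on every execution of Algorithm~\ref{filter-comp} under the curve filter of Equation~\ref{priv-filter-alpha}.

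Next I would analyse the blocking rule of the curve filter structurally. For a fixed execution, let $A_i = \{\alpha \in \Lambda : \sum_{j \leq i} \epsilon_j(\alpha) \leq \erdp(\alpha)\}$ be the set of orders still within budget after $i$ rounds, using the budgets actually consumed (zero on blocked rounds). Since cumulative budgets are nondecreasing in $i$ while the thresholds $\erdp(\alpha)$ are fixed, the family $A_i$ is nested and nonincreasing. By Equation~\ref{priv-filter-alpha} the filter sets $\epsilon_i = 0$ only when, with the proposed budget added, \emph{all} orders would be over budget; equivalently, a query is admitted, and charged simultaneously to every order, exactly while the corresponding live set remains nonempty. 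A short induction then gives $A_i \neq \emptyset$ for all $i$, so at termination at least one order has its entire cumulative budget respected. This is the curve analogue of the invariant used in Theorem~\ref{prop:filter-composition}.

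The main obstacle is that the surviving order produced by the previous step is \emph{execution-dependent}, whereas $D_\alpha(V^0 || V^1)$ is a functional of the fixed pair of output distributions and therefore needs a single order valid across all realizations at once: on views where a different order is the binding survivor, the order-$\alpha$ budget may genuinely be exceeded, so the fixed-$\alpha$ peeling of the first step does not close by itself. I would resolve this by arguing at the level of the privacy loss rather than the divergence, partitioning the outcomes of $V^0$ according to their live order, controlling each part's privacy-loss tail through the per-order RDP-to-DP bound of Equation~\ref{rdp-to-dp}, and recombining either by a union bound over $\Lambda$ in the countable case (costing only an additive $\log|\Lambda|$ term) or, when the thresholds are chosen so that the nested sets $A_i$ degenerate to a single persistently-live extremal order, with no union cost at all. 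Making this promotion from a per-execution survivor to one order governing the entire divergence, and doing so uniformly when $\Lambda$ is infinite, is the delicate technical point; everything else is a direct transcription of Theorem~\ref{prop:filter-composition}.
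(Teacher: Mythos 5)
Your first two paragraphs are a restatement of the paper's own proof: the paper establishes the per-execution existence of a surviving order by contradiction (taking the last index at which the filter did not \PASS), which is exactly the content of your nested-live-sets induction for $A_i \neq \emptyset$. The genuine divergence is your third paragraph, and your instinct there is correct --- in fact more correct than the paper. The paper's proof ends with ``Applying Theorem~\ref{prop:filter-composition} to this $\alpha$ concludes the proof,'' silently promoting an execution-dependent survivor to a fixed order; that promotion is not merely delicate, it is impossible, and the corollary as literally stated is false. Take $\Lambda=\{2,3\}$ with $\erdp(2)=1$, $\erdp(3)=2$ (this is the paper's own DP instantiation $\erdp(\alpha)=\edp-\frac{\log(1/\delta)}{\alpha-1}$ with $\edp=3$, $\log(1/\delta)=2$). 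The first query is a data-independent coin flip, costing $\epsilon_1(\alpha)=0$. On heads the adversary submits a mechanism with RDP curve $(\epsilon(2),\epsilon(3))\approx(0.5,3)$, admitted because order $2$ is within budget; on tails, one with curve $\approx(1.5,2)$, admitted because order $3$ is within budget. Both curves are nondecreasing in $\alpha$ and realized with near equality by explicit pairs of two-point distributions. Then $D_2(V^0\|V^1)=\log\bigl(\tfrac12 e^{0.5}+\tfrac12 e^{1.5}\bigr)\approx 1.12>1$ and $D_3(V^0\|V^1)=\tfrac12\log\bigl(\tfrac12 e^{6}+\tfrac12 e^{4}\bigr)\approx 2.72>2$: every fixed order exceeds its threshold, because at each order the mixture $e^{(\alpha-1)D_\alpha}$ is dragged over budget by the branch on which that order was not the survivor.

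Consequently, your proposed resolution should be read not as a way to finish this proof but as a proof of a different, weaker statement --- which is the one that is actually true. Partitioning outcomes by the (adapted) surviving order, comparing each cell to the interaction truncated at that fixed order (to which Theorem~\ref{prop:filter-composition} legitimately applies), converting with Equation~\ref{rdp-to-dp}, and union-bounding over $\Lambda$ yields a tail bound on $|\loss(v)|$ with an additive $\frac{\log|\Lambda|}{\alpha-1}$ inflation, equivalently $(\edp,|\Lambda|\delta)$-DP in place of the paper's claimed $(\edp,\delta)$-DP. This is exactly the $\log|\Lambda|$ price the paper itself pays in its odometer analysis (Equation~\ref{eq:dp-odom-curve}) but omits for the filter, so the paper's downstream $(\edp,\delta)$-DP implication inherits the same gap. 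Your ``no union cost'' escape hatch applies only when one order survives on every realization simultaneously, which is just Theorem~\ref{prop:filter-composition} again. In short: you found a real bug in the paper's argument rather than leaving a gap in your own, but to make your sketch a theorem you must restate the result as a high-probability privacy-loss bound with the $\log|\Lambda|$ penalty (or as a per-realization guarantee); no proof can deliver the corollary's literal claim $\exists\alpha: D_\alpha(V^0\|V^1)\leq\erdp(\alpha)$.
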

\begin{proof}
We argue by contradiction that $\exists \alpha: \sum_i \epsilon_i(\alpha) \leq \erdp(\alpha)$. Assume that was not the case, and consider $j = \arg\max_i (\max_\alpha \epsilon_i(\alpha)\neq 0)$ the maximal index for which the filter did not \PASS. Then $\sum_i \epsilon_i(\alpha) > \erdp(\alpha)$, the filter \PASS ed and $\forall \alpha: \epsilon_i(\alpha) = 0$, which is a contradiction.
Applying Theorem~\ref{prop:filter-composition} to this $\alpha$ concludes the proof.
\end{proof}

\subsection{$(\epsilon, \delta)$-DP Implications}

A key application of Corollary~\ref{prop:filter-renyi-curve-composition} is to build a privacy filter that enforces an $(\edp, \delta)$-DP guarantee, and leverages RDP to ensure strong composition.
Intuitively, for each $\alpha$ the corresponding $\erdp(\alpha)$ is set to the value which, for this $\alpha$, implies the desired $(\edp, \delta)$-DP target.
Formally, based on Equation \ref{rdp-to-dp} we set the filter from Equation \ref{priv-filter-alpha} to $\forall \alpha \in \Lambda: \erdp(\alpha) = \edp - \frac{\log(1/\delta)}{\alpha-1}$.
Applying Corollary~\ref{prop:filter-renyi-curve-composition} followed by Equation \ref{rdp-to-dp} shows that under this filter, Algorithm \ref{filter-comp} is $(\edp, \delta)$-DP.

This is significant, as this construction yields strong DP composition with a simple additive (RDP) privacy filter.
There is no overhead compared to non-adaptive RDP, which yields tighter results than DP strong composition in many practical settings, such as the composition of Gaussian mechanisms ubiquitous in DP deep learning.
On the contrary, Theorem 5.1 from \cite{10.5555/3157096.3157312} is restricted to DP strong composition, and pays a significant multiplicative constant.
The result from Theorem \ref{prop:filter-composition} already appears in \cite{feldman2020individual} as a corollary to a more general result.
Here we show a proof closer to that of RDP composition, that we will re-use in subsequent results.
We also explicitly show how to implement filters over the R\'enyi curve in \S\ref{sec:filter-renyi-curve}.

\section{Privacy Odometers}
\label{sec:odometers}

Privacy filters are useful to support sequences of computations with adaptive privacy budgets, under a fixed total privacy loss.
In many cases however, we are interested in a more flexible setup in which we can also stop at any time, and bound the privacy loss of the realized sequence.
Examples of such use-cases include early stopping when training or fine-tuning deep learning models, ``accuracy first'' machine learning \cite{ligett2017accuracy}, or an analyst interacting with the data until a particular question is answered.

\subsection{The Privacy Odometer Setup}
Algorithm \ref{odometer-comp} formalizes our privacy odometer setup. The adversary $A$ can now interact with the data in an unrestricted fashion.
For the given RDP order $\alpha$, the odometer returns the view and the RDP parameter $\erdp^{tot}$, which must be a running upper-bound on $D_\alpha \big( V^0(v_{\leq i}) || V^1(v_{\leq i}) \big)$ since the interaction can stop at any time.

\begin{algorithm}[H]
  \caption{$\textrm{OdometerComp}(A,k,b, \alpha, \erdp^{f \in \N_1})$}
  \label{odometer-comp}
\begin{algorithmic}
  \FOR {$i = 1, \ldots, k$}
    \STATE $A = A(v_1, \ldots, v_{i-1})$\;
    \STATE $A$ gives neighboring $D_i^{0}, D_i^{1}, \epsilon_i, \textrm{and~} M_i: \cD \rightarrow \cR$ an $(\alpha, \epsilon_i)$-RDP mechanism\;
    \STATE $f = \arg\min_g \{ \textrm{FILT}_{\alpha, \erdp^g}(\epsilon_1, \ldots, \epsilon_i, 0, \ldots, 0) \neq \mPASS \}$\;
    \STATE $A$ receives $v_i \sim V^b_i = M_i(D_i^{b})$\;
    \ENDFOR
  \OUTPUT view $V^b=(v_1, \ldots, v_k)$, stopping filter $f$, and spent budget $\erdp^{tot} = \erdp^{f}$.
\end{algorithmic}
\end{algorithm}

Analyzing such an unbounded interaction directly with RDP is challenging, as the very RDP parameters can change at each step based on previous realizations of the random variable, and there is no limit enforced on their sum.
We sidestep this challenge by initiating the odometer with a sequence of $\erdp^{f}$, with $f \in \N_1$ a strictly positive integer.
After each new RDP computation, we compute the smallest $f$ such that a filter $\textrm{FILT}_{\alpha, \erdp^f}$ initiated with budget $\erdp^{f}$ would not \PASS.
We show how to analyse the privacy guarantees of this construct in sections \ref{sec:odometer-anaysis} and \ref{sec:dp-odometer}, before extending it to support RDP accounting over the R\'enyi curve in section \ref{sec:odometer-renyi-curve}.
Finally, section \ref{sec:odometer-instantiation} discusses how to instantiate $\erdp^{f \in \N_1}$ to get efficient odometers.

\subsection{R\'enyi Differential Privacy Analysis}
\label{sec:odometer-anaysis}

To analyse Algorithm \ref{odometer-comp}, we define a truncation operator $\truncf$, which changes the behavior of an adversary $A$ to match that of a privacy filter of size $\erdp^f$.
Intuitively, $\truncf(A)$ follows $A$ exactly until $A$ outputs an RDP mechanism that would cause a switch to the $(f+1)^{th}$ filter.
When this happens, $\truncf(A)$ stops interacting until the end of the $k$ queries.
More precisely, at round $i$ $\truncf(A)$ outputs the same $D_i^{0}, D_i^{1}$, $\epsilon_i$, and $M_i$ as $A$ as long as $\arg\min_g \{ \textrm{FILT}_{\alpha, \erdp^g}(\epsilon_0, \ldots, \epsilon_i, 0, \ldots, 0) \neq \mPASS \} \leq f$. Otherwise,  $\truncf(A)$ returns $\epsilon_i = 0$ and $M_i: x \rightarrow \perp$.

We similarly note $\truncf(V^b)$ the distribution over possible views when adversary $\truncf(A)$ interacts following Algorithm \ref{odometer-comp}, and $\truncf(v)$ a corresponding truncation of a view generated by adversary $A$.
This truncated adversary behaves like a privacy filter, yielding the following result.

\begin{corollary}[Truncated RDP Odometer]
  \label{prop:truncated-rdp-odometer}
  The interaction mechanism from Algorithm \ref{odometer-comp}, is such that $D_\alpha \big( \truncf(V^0) || \truncf(V^1) \big) \leq \erdp^f$.
\end{corollary}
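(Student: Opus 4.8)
The plan is to reduce Corollary~\ref{prop:truncated-rdp-odometer} to the already-established filter result, Theorem~\ref{prop:filter-composition}, rather than re-deriving the telescoping integral from scratch. The guiding observation is that $\truncf$ was designed precisely so that the truncated adversary $\truncf(A)$ self-enforces the budget $\erdp^f$: its interaction with the odometer of Algorithm~\ref{odometer-comp} should coincide, as a distribution over views, with an interaction of $\truncf(A)$ against the filter of Equation~\ref{priv-filter} instantiated with $\erdp = \erdp^f$.

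First I would record the budget invariant maintained by $\truncf(A)$. By construction it emits the genuine $\epsilon_i, M_i$ only while the smallest non-passing filter index remains $\le f$, i.e.\ while $\sum_{j \le i} \epsilon_j \le \erdp^f$, and otherwise emits $\epsilon_i = 0$ together with the null mechanism $M_i : x \mapsto \perp$. Since $M_i : x \mapsto \perp$ sends both neighboring inputs to the same point mass at $\perp$, it is $(\alpha, 0)$-RDP, so each effective per-round parameter is a legitimate RDP budget and the running sum of effective budgets never exceeds $\erdp^f$ (the crossing query and every query after it contribute $0$).

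Next I would verify that running $\truncf(A)$ through Algorithm~\ref{odometer-comp} yields exactly the same view distribution as running $\truncf(A)$ through Algorithm~\ref{filter-comp} with $\erdp = \erdp^f$. The point is that $\textrm{FILT}_{\alpha, \erdp^f}$ never reaches its $\mPASS$ branch on $\truncf(A)$'s queries: on a genuine query the partial sum is $\le \erdp^f$, so the filter does not \PASS~and the real sample $M_i(D_i^b)$ is returned; on a null query the partial sum is again $\le \erdp^f$ (the null adds $0$), so the filter again does not \PASS~and returns $v_i \sim M_i(D_i^b) = \perp$. Hence both algorithms return the genuine $M_i(D_i^b)$ before the budget is spent and $\perp$ afterwards, and in particular the odometer's $\truncf(V^0), \truncf(V^1)$ equal the filter's $V^0, V^1$ for this adversary.

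With this identification, Theorem~\ref{prop:filter-composition} applies verbatim to $\truncf(A)$ as a filter adversary with bound $\erdp^f$, giving $D_\alpha\big(\truncf(V^0) \,||\, \truncf(V^1)\big) \le \erdp^f$. I expect the only delicate step to be the bookkeeping of the previous paragraph: confirming that the self-truncation of $\truncf(A)$ mimics the filter's \PASS~decisions exactly, so that no budget is ever consumed beyond $\erdp^f$ and the null queries never trip the filter. Once that equivalence is pinned down, the analytic content is entirely inherited from the filter theorem.
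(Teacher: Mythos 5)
Your proposal is correct and follows essentially the same route as the paper: the paper's proof likewise observes that running $\truncf(A)$ through Algorithm~\ref{odometer-comp} is identical to running it through Algorithm~\ref{filter-comp} instantiated with $\textrm{FILT}_{\alpha, \erdp^f}$ (which never \PASS es), and then invokes Theorem~\ref{prop:filter-composition}. Your write-up merely makes explicit the bookkeeping that the paper leaves implicit, namely that the null mechanism is $(\alpha,0)$-RDP and that neither genuine nor null queries of the truncated adversary can trip the filter.
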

\begin{proof}
  The interaction from Algorithm \ref{odometer-comp} with $\truncf(A)$ is identical to that of Algorithm \ref{filter-comp} with $\truncf(A)$ initiated with $\textrm{FILT}_{\alpha, \erdp^f}$, since the filter would never \PASS.
  The statement follows from Theorem \ref{prop:filter-composition}.
\end{proof}

We note that the odometer from \cite{feldman2020individual} (Proposition 5.2) is closer to this truncated odometer than to the odometer from \cite{10.5555/3157096.3157312}. Indeed, the guarantee it gives is for $k$ stacked privacy filters, where $k$ is fixed in advance.
If applied with an adaptive $k$ (e.g. early stopping) the result would violate the lower bound proven in \cite{10.5555/3157096.3157312}, Theorem 6.1.
We next show how to provide odometer semantics with RDP composition bounds.

\subsection{$(\epsilon, \delta)$-DP Implications}
\label{sec:dp-odometer}

We are now ready to bound the privacy loss of Algorithm \ref{odometer-comp}.

\begin{theorem}[DP Odometer]
\label{prop:dp-odometer}
  For a given outcome $v$ and $f$ of the interaction described in Algorithm \ref{odometer-comp}, we have
  that $P\big(|\loss(v)| > \erdp^f + \frac{\log(2f^2/\delta)}{\alpha-1}\big) \leq \delta$.

  This implies that the interaction is $(\erdp^f + \frac{\log(2f^2/\delta)}{\alpha-1}, \delta)$-DP.
\end{theorem}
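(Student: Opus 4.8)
The plan is to decompose the target event according to the realized value of the stopping filter $f$, reduce each piece to the truncated odometer of Corollary~\ref{prop:truncated-rdp-odometer}, and then reassemble the pieces with a union bound over $f$ using a carefully chosen per-level failure budget. Throughout, let $f(v)$ denote the realized stopping index for a view $v$, and let $E_f := \{ v : f(v) = f \}$ be the event that the interaction stops at level $f$.

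First I would establish the crucial \emph{measure-swap} identity. Since every $\epsilon_i \geq 0$ the partial sums $\sum_{j\leq i}\epsilon_j$ are non-decreasing in $i$, and (for the natural instantiation where $\erdp^g$ increases in $g$) the realized index $f(v_{\leq i})$ is therefore non-decreasing across rounds. Consequently, on $E_f$ the index never exceeds $f$, so the truncated adversary $\truncf(A)$ never triggers its truncation and follows $A$ exactly; the measures $V^b$ and $\truncf(V^b)$ thus have identical densities on $E_f$. In particular, writing $\tloss_f(v) := \log\!\big(\truncf(V^0)(v)/\truncf(V^1)(v)\big)$ for the truncated privacy loss, we have $\loss(v) = \tloss_f(v)$ for every $v \in E_f$, and the $V^0$-probability of any sub-event of $E_f$ equals its $\truncf(V^0)$-probability.

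Next, for each fixed $f$, Corollary~\ref{prop:truncated-rdp-odometer} gives $D_\alpha\big(\truncf(V^0) \,||\, \truncf(V^1)\big) \leq \erdp^f$, so Equation~\ref{rdp-to-dp} applied with failure probability $\delta_f$ yields $P_{v \sim \truncf(V^0)}\big(|\tloss_f(v)| > \erdp^f + \tfrac{\log(1/\delta_f)}{\alpha-1}\big) \leq \delta_f$. Choosing $\delta_f = \delta/(2f^2)$ makes the per-level threshold exactly $\erdp^f + \tfrac{\log(2f^2/\delta)}{\alpha-1}$, matching the statement. Partitioning the target probability over $f$ and invoking the measure-swap identity on each $E_f$ then gives
\begin{align*}
  P_{v \sim V^0}&\Big(|\loss(v)| > \erdp^{f(v)} + \tfrac{\log(2f(v)^2/\delta)}{\alpha-1}\Big) \\
  &= \sum_{f \geq 1} P_{v \sim V^0}\Big(E_f,\; |\loss(v)| > \erdp^{f} + \tfrac{\log(2f^2/\delta)}{\alpha-1}\Big) \\
  &= \sum_{f \geq 1} P_{v \sim \truncf(V^0)}\Big(E_f,\; |\tloss_f(v)| > \erdp^{f} + \tfrac{\log(2f^2/\delta)}{\alpha-1}\Big) \\
  &\leq \sum_{f \geq 1} \delta_f = \frac{\delta}{2}\sum_{f\geq 1}\frac{1}{f^2} = \frac{\pi^2}{12}\,\delta \leq \delta ,
\end{align*}
where the inequality drops the intersection with $E_f$ so as to apply Equation~\ref{rdp-to-dp} to the unconditional truncated mechanism. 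The DP implication then follows from the privacy-loss characterization of Equation~\ref{eq:dp-as-privacy-loss-bound}.

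The main obstacle is the measure-swap step of the second paragraph: justifying that conditioning on the realized level $f$ lets one replace the unbounded interaction $V^0$ by the truncated, and hence $(\alpha,\erdp^f)$-RDP, interaction $\truncf(V^0)$ without altering either the privacy loss or the probability of the bad event. This hinges on the monotonicity of $f(v)$ together with the fact that truncation only modifies trajectories that \emph{overshoot} level $f$ --- neither of which occurs on $E_f$ --- and it is exactly what lets a single RDP bound per level, rather than an analysis of the genuinely adaptive $f$, suffice. The remaining ingredient, summability of $\sum_f \delta_f$, is what forces the $1/f^2$ (equivalently, the $\log(2f^2/\delta)$) shape of the odometer's running bound.
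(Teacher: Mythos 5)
Your proof is correct and follows essentially the same route as the paper's: decompose over the realized filter level, identify the law of the true interaction with that of the truncated interaction $\truncf(V^b)$ on the relevant event, apply Corollary~\ref{prop:truncated-rdp-odometer} together with Equation~\ref{rdp-to-dp} at level $\delta_f = \delta/(2f^2)$, and sum over $f$. The only (cosmetic) difference is that you perform the measure swap directly on $\{F = f\}$ using the monotonicity of the realized index, whereas the paper first relaxes $\{F=f\}$ to $\{F \leq f\}$ and swaps via the event $\lnot \mPASS_f$; both hinge on the same observation that truncation at $f$ leaves non-overshooting trajectories untouched.
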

\begin{proof}
  We first show an important relationship between the interaction of Algorithm \ref{odometer-comp} and its truncated version. Denote $\mPASS_f$ the event that $\truncf(V^0)$ stopped, that is $\truncf(v)$ includes at least one $\perp$. Denote $F$ the random variable from which $f$ is drawn (its randomness comes from the randomness in both $A$ and $V$).

  Further note $\tloss(\tilde{v}) = \log\big( \frac{P(\truncf(V^0) = \tilde{v})}{P(\truncf(V^1) = \tilde{v})} \big)$ the privacy loss of an outcome $\tilde{v} \sim \truncf(V^0)$ in the truncated version of Algorithm \ref{odometer-comp}.
  Because truncation at $f$ does not change anything when $F \leq f$, we have that:
\begin{align*}
  & P(V^b = v \big| F \leq f) = P(\truncf(V^b) = v \big| \lnot \mPASS_f) \\
  \Rightarrow & P\big(|\loss(v)| > c \big| F \leq f \big) = P\big(|\tloss(v)| > c \big| \lnot \mPASS_f \big) \\
  \Rightarrow & P\big(|\loss(v)| > c, F \leq f \big) = P\big(|\tloss(v)| > c, \lnot \mPASS_f \big) ,
\end{align*}
 where the last inequality uses the fast that $P(F \leq f) = P(\lnot \mPASS_f)$ by definition of the truncation operator and $\mPASS_f$ event.

Second, we bound the privacy loss of the non-truncated interaction as follows:
\begin{align*}
  & P\big(|\loss(v)| > \erdp^f + \frac{\log(2f^2/\delta)}{\alpha-1}\big) \\
  & = \sum_f P\big(|\loss(v)| > \erdp^f + \frac{\log(2f^2/\delta)}{\alpha-1}, F=f \big) \\
  & \leq \sum_f P\big(|\loss(v)| > \erdp^f + \frac{\log(2f^2/\delta)}{\alpha-1}, F \leq f \big) \\
  & = \sum_f P\big(|\tloss(v)| > \erdp^f + \frac{\log(2f^2/\delta)}{\alpha-1}, \lnot \mPASS_f \big) \\
  & \leq \sum_f P\big(|\tloss(v)| > \erdp^f + \frac{\log(2f^2/\delta)}{\alpha-1} \big) \\
  & \leq \sum_f \frac{\delta}{2f^2} \leq \delta .
\end{align*}
  Where the second equality follows from the previous step, and the penultimate inequality applies Corollary \ref{prop:truncated-rdp-odometer} with $\delta = \frac{\delta}{2f^2}$, and Equation \ref{rdp-to-dp}.
\end{proof}

\subsection{Composing Over the R\'enyi Curve}
\label{sec:odometer-renyi-curve}

Theorem \ref{prop:dp-odometer} gives us an RDP based bound on the privacy loss, with a penalty for providing a running upper-bound over the sequence of $\erdp^f$.
To fully leverage RDP strong composition, we once again need to track the RDP budget over the R\'enyi curve. Applying a union bound shows that, for all $\alpha \in \Lambda$:
\begin{equation}
  \label{eq:dp-odom-curve}
  P\big(|\loss(v)| > \erdp^f + \frac{\log(|\Lambda| 2f^2/\delta)}{\alpha-1}\big) \leq \delta .
\end{equation}

Concretely, we track the budget spent by $A$ in Algorithm \ref{odometer-comp} over a set of RDP orders $\alpha \in \Lambda$, denoted $\erdp(\alpha)$.
When mapping this RDP consumption to a DP bound, for each $\alpha$ we find $f$ such that $\erdp(\alpha) \leq \erdp^f + \frac{\log(|\Lambda| 2f^2/\delta)}{\alpha-1}$.
The minimum value on the right hand side over all orders is our upper-bound on the privacy loss.

\subsection{Instantiating the Odometer}
\label{sec:odometer-instantiation}

There are two main choices to make when instantiating our privacy odometer.
The first choice is that of $\erdp^f(\alpha)$.
At a given order $\alpha$, the best DP guarantee we can hope for is $\epsilon = \frac{\log(|\Lambda| 2/\delta)}{\alpha-1}$.
We set $\erdp^1(\alpha) = \frac{\log(|\Lambda| 2/\delta)}{\alpha-1}$, ensuring that we can always yield a DP guarantee within a factor two of the lowest possible one when $f=1$.
We then preserve this factor two upper-bound by doubling the filter at every $f$, yielding:
\[
  \erdp^f(\alpha) = 2^{f-1} \frac{\log(|\Lambda| 2/\delta)}{\alpha-1} .
\]
Note that the final DP guarantee will not be a step function doubling at every step, as we can choose the best DP guarantee over all $\alpha$s a posteriori.

We then need to choose the set $\Lambda$ of RDP orders to track, with multiple considerations to balance.
First, the final DP guarantee implied by Equation \ref{eq:dp-odom-curve} has a dependency on $\sqrt{\log(|\Lambda|)}$, so $|\Lambda|$ cannot grow too large.
Second, the largest $\alpha$ will constrain the lowest possible DP guarantee possible.
\cite{10.5555/3157096.3157312} suggests that a minimum DP guarantee of $\edp = \frac{1}{n^2}$, where $n$ is the size of the dataset, is always sufficient. This is because the result of a DP interaction has to be almost independent of the data at such an $\edp$, rendering accounting at lower granularity meaningless.
In this case, setting $\big\{ \Lambda = 2^i, i \in \{1, \log_2(n^2)\} \big\}$ yields the same optimal dependency on $n$ as \cite{10.5555/3157096.3157312}, Theorems 6.1 and 6.3.
In contrast to the result from \cite{10.5555/3157096.3157312}, the resulting odometer does not degrade for values $\edp$ larger $1$, making it more practical in situations where large budgets are expected.

In practice, many applications have high DP cost for which the minimal granularity of $\frac{1}{n^2}$ is too extreme.
Such applications can gain from a higher resolution of RDP orders in the relevant range, yielding a tighter RDP bound.
This effect is particularly important due to our exponential discretisation of the RDP budget spent at each order.
By choosing the range of RDP orders $\Lambda$, we can thus trade-off the $\log(|\Lambda|)$ cost with the RDP order resolution and the minimum budget granularity.
This ease of adaptation, combined with efficient RDP accounting, make our privacy odometer the first to be practical enough to apply to DP deep learning.
In the rest of this paper, we use $\alpha = \{1.25, 1.5, 1.75, \ldots, 9.75, 10, 16, 32\}$ for all results.
These are typical values used in RDP accounting for deep learning, with a slightly higher resolution than the one given as example in \cite{8049725}.

\section{Applications to DP Deep Learning}
\label{sec:applications}

Practical privacy filters and odometers open a new design space for DP, in particular in its applications to deep learning models.
We next showcase early applications of both constructs for classification models on the CIFAR-10 dataset \cite{cifar-dataset}.
Our goal is not to develop state of the art DP models, but to show the promise of adaptive DP budgets in controlled experiments.
Our Opacus based implementation for privacy filters and odometers, as well as the code to replicate experiments, is available at \url{https://github.com/matlecu/adaptive_rdp}.

\begin{figure*}[ht!]
	\centering
	\subfigure[Test Set Accuracy]{
		\includegraphics[width=0.23\textwidth]{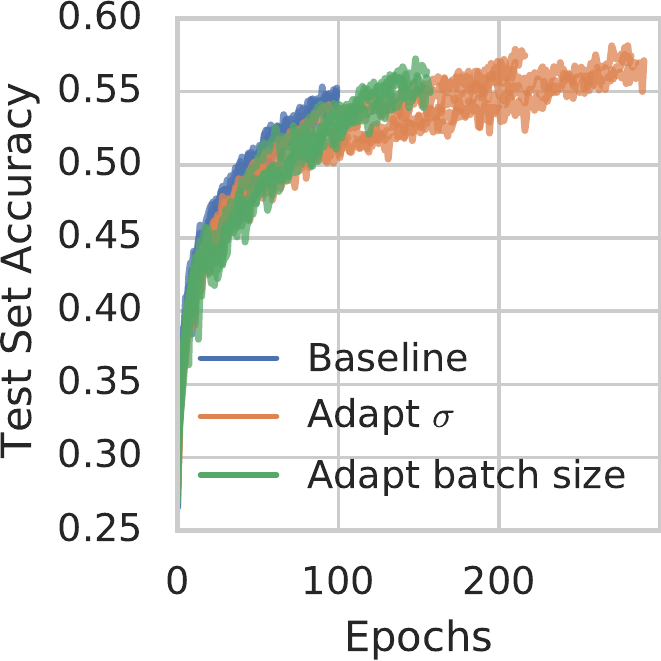}
		\label{fig:adapt-acc}
	}
	\subfigure[Budget Consumed]{
		\includegraphics[width=0.23\textwidth]{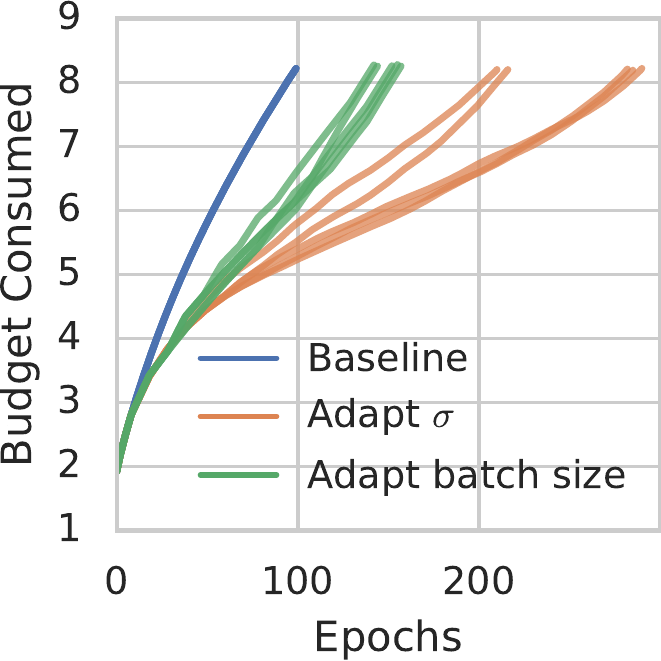}
		\label{fig:adapt-eps}
	}
	\subfigure[Noise Adaptation Example]{
		\includegraphics[width=0.23\textwidth]{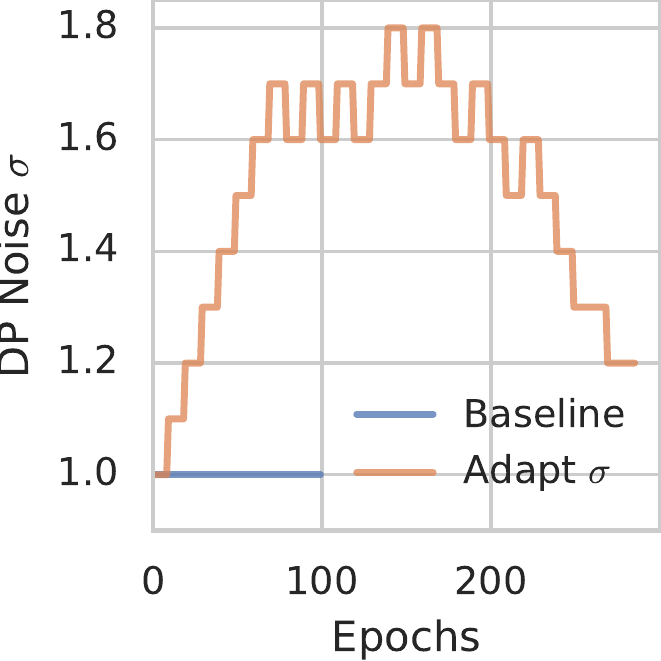}
		\label{fig:adapt-noise-size}
	}
	\subfigure[Batch Adaptation Example]{
		\includegraphics[width=0.23\textwidth]{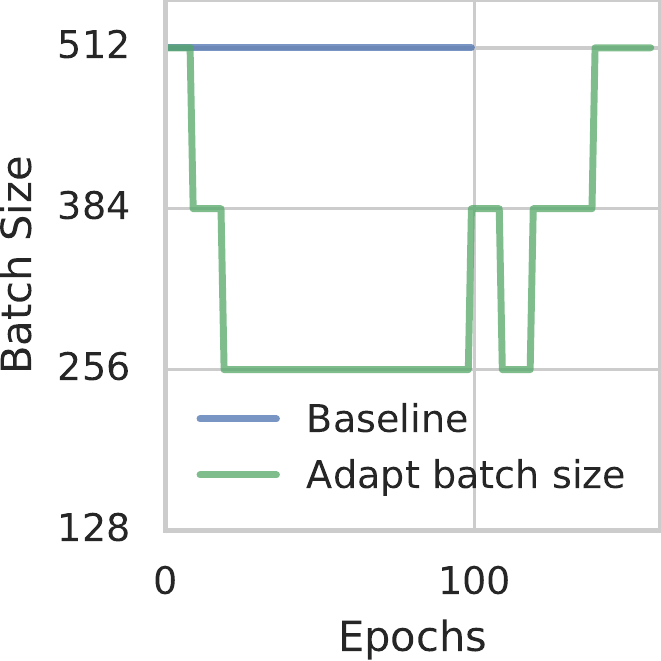}
		\label{fig:adapt-batch-size}
	}
	\vspace{-5pt}
	\caption{
		\textbf{Adaptive strategies}.
    Figures \ref{fig:adapt-acc} and \ref{fig:adapt-eps} respectively show the test set accuracy and DP budget consumed during training, over five independent runs. Adaptive policies can train for longer, for a better average accuracy ($+2.64\%$ for noise adaptation).
    Figures \ref{fig:adapt-noise-size} and \ref{fig:adapt-batch-size} show a representative instance of the evolution of noise and batch size.
	}
	\vspace{-7pt}
	\label{fig:finetune}
\end{figure*}

\subsection{Privacy Filter Applications}
\label{sec:eval:filter}

Intuitively, adaptivity can be used to save budget during training to perform more steps in total, increasing accuracy under a fixed total privacy loss.
By far the most common method to train deep learning models with DP is DP Stochastic Gradient Descent (DP-SGD) \cite{abadi2016deep}, which clips the gradients of individual data points to bound their influence on each SGD step, and adds Gaussian noise to the mini-batch's gradients to ensure that each parameter update is DP. Composition is then used to compute the total privacy loss incurred when training the model.

\paragraph{Adaptation policy.}
Starting from a non adaptive baseline, every $10$ epochs we compute the number of correctly classified train set images.
Such a computation needs to be DP, and we use the Gaussian mechanism with standard deviation $\sigma=100$.
We account for this budget in the total used to train the model.
If the number of correctly classified examples increased by at least three standard deviations, a ``significant increase'' that ensures this change is never due to the DP noise, we additively decrease the privacy budget used at each mini-batch for the next $10$ epochs.
If there isn't such a significant increase, we increase the budget by the same amount, up to that of the non adaptive baseline.
Finally, we only lower the privacy budget when the filter will allow more than $50$ epochs under the current budget.

There are two main parameters in DP-SGD that influence the budget of each mini-batch computation: the size of the batch, and the standard deviation of the noise added to the gradients.
We apply our policy to each parameter separately. For the noise standard deviation, we add or remove $0.1$. For the batch size, we add or remove $128$ data points in the mini-batch (with a minimum of $256$), which is the size of sub-mini-batch that we use to build the final batch.

\paragraph{Results.}
Figure \ref{fig:adaptive-filter} shows these two adaptive policies applied to a baseline ResNet9 model, trained for $100$ epochs with a fixed learning rate of $0.05$, gradient clipping $1$, DP noise $\sigma=1$, and batch size $512$. BatchNorm layers are replaced with LayerNorm to support DP.
As expected, the policy first decreases the privacy budget, increasing the noise (Fig. \ref{fig:adapt-noise-size}) or decreasing the batch size (Fig. \ref{fig:adapt-batch-size}). This translates to less consumption in DP budget (Fig. \ref{fig:adapt-eps}) and longer training, up to $3\times$ the number of epochs (and up to 15h vs 3.5h on one Tesla M60 GPU).
When performance starts to plateau, the DP budget is increased back to its original level.
Despite the slower start in accuracy, depicted on Figure \ref{fig:adapt-acc}, the end result is significantly better. Over $5$ runs, the average accuracy of the baseline is $54.58\%$ (min/max $53.40\%$/$55.24\%$), while adapting the batch size yields $55.56\%$ ($54.93\%$/$56.38\%$) and adapting noise $57.22\%$ ($56.96\%$/$57.4\%$).
The noise adaptation policy always performs better than the best baseline run, by more than $1.72\%$ and with an average increase of $2.64\%$.


\begin{figure*}[ht!]
	\centering
	\subfigure[Fine-tuning]{
		\includegraphics[width=0.23\textwidth]{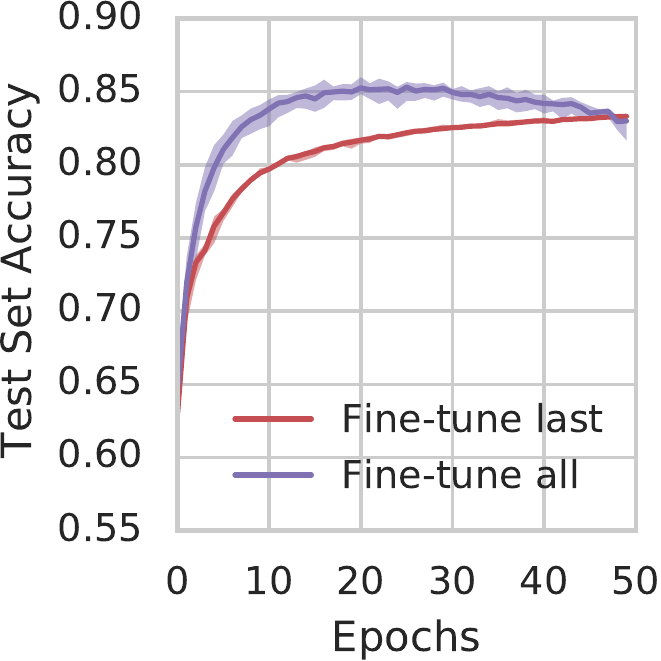}
		\label{fig:finetune-stop-acc}
	}
	\subfigure[Fine-tuning]{
		\includegraphics[width=0.23\textwidth]{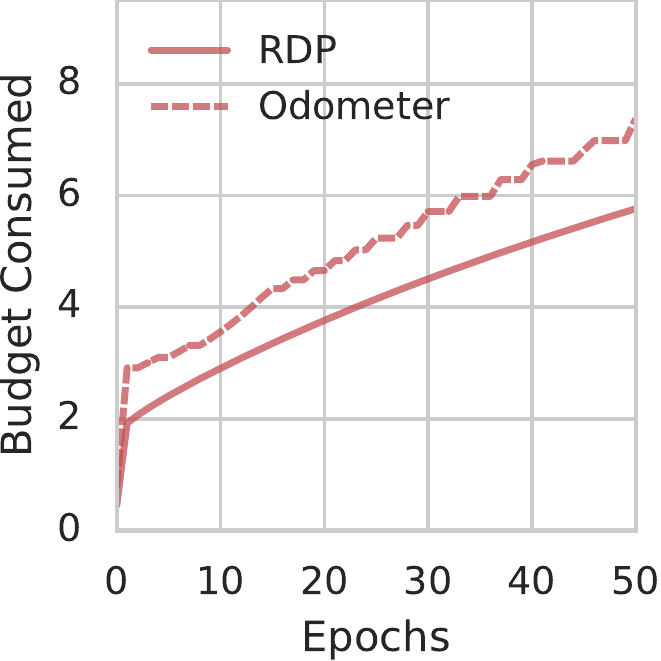}
		\label{fig:finetune-stop-eps}
	}
	\subfigure[Adaptive Fine-tuning]{
		\includegraphics[width=0.23\textwidth]{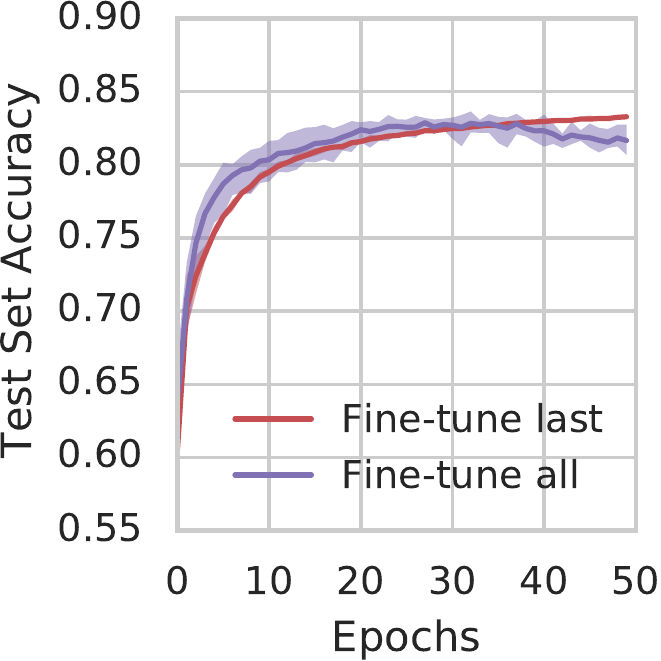}
		\label{fig:finetune-adapt-acc}
	}
	\subfigure[Adaptive Fine-tuning]{
		\includegraphics[width=0.23\textwidth]{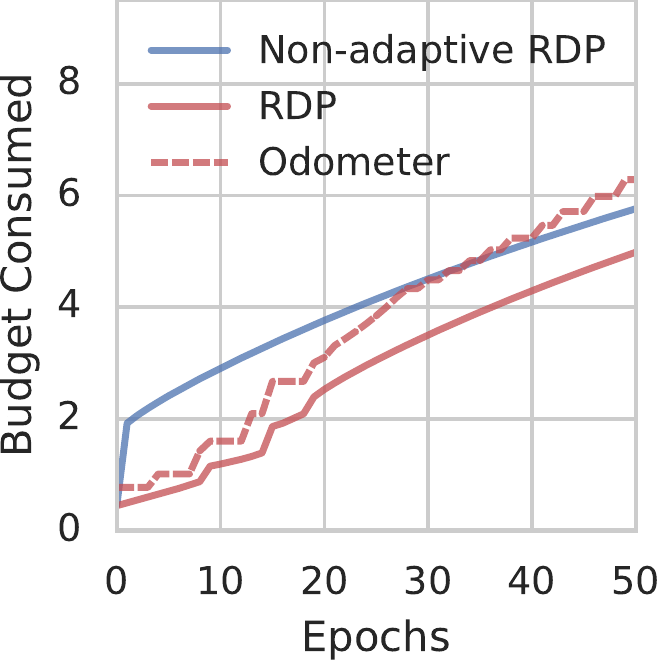}
		\label{fig:finetune-adapt-eps}
	}
	\vspace{-5pt}
	\caption{
    \textbf{Odometer} based privacy loss upper bound (dashed), when fine-tuning an ImageNet model for CIFAR-10. We show the accuracy over 5 runs \ref{fig:finetune-stop-acc} and odometer privacy loss compared the curve of a non-early stopping RDP curve that runs for 50 epochs \ref{fig:finetune-stop-eps}. We then leverage adaptivity to start with lower privacy budgets, and show the resulting accuracy \ref{fig:finetune-adapt-acc} and privacy loss curves \ref{fig:finetune-adapt-eps}.
	}
	\vspace{-7pt}
	\label{fig:adaptive-filter}
\end{figure*}

\subsection{Privacy Odometer Applications}

A natural application of privacy odometers is fine-tuning an existing model with DP, so that the privacy of the fine-tuning dataset in preserved.
In this setup, we expect to reach good performance in a small but unknown number of epochs.
We would like to stop when the model reaches satisfactory accuracy on the new dataset (or stops improving), without incurring more privacy loss than necessary.

\paragraph{Setup.}
We showcase such an adaptive scenario by fine-tuning Pytorch's pre-trained ResNet18 for ImageNet so that it predicts on CIFAR-10. To this end, we modify the last layer of the original model to output only $10$ classes, and resize the images to that they have the expected $224 \times 224$ size.
When fine-tuning the whole model, we still freeze BatchNorm parameters and apply the layer in test mode, because DP-SGD is not compatible with non frozen BatchNorm.
We also show results when training the last layer only.
In each case, we fine-tune the model for $50$ epochs with learning rate $0.01$, DP noise standard-deviation $\sigma=1$, and mini-batch size $512$.
Finally, we combine fine-tuning with an adaptive policy that starts from a larger $\sigma=2$ DP noise, and decreases it by $0.1$ decrements after every epoch during which the number of correctly predicted train set images did not significantly increase (see \S\ref{sec:eval:filter} for details).
Full fine-tuning takes 8h on our Tesla M60 GPU, and last layer fine-tuning 2.5h.

\paragraph{Results.}
Figure \ref{fig:finetune} shows the accuracy reached during fine-tuning either the last layer or the entire model, as well as the DP budget consumption under RDP and the odometer.
We can think of the RDP curve as a privacy filter filling up until reaching the planned $50$ epochs. One can stop at any time, but the entire budget is spent regardless.
Focussing on non-adaptive fine-tuning, \ref{fig:finetune-stop-acc} shows that we may want to use the fully fine-tuned model from epoch $20$, and the end version of the last layer fine-tuned model. Either way, we pay the full $\epsilon=5.76$.
With the odometer, we can stop at the $20^{th}$ epoch having spent only $\epsilon=4.7$.

In case we have an accuracy goal in mind, say $80\%$, we can do even better by using the adaptive policy to fine-tune the model. Under that policy, the fully fine-tuned model consistently reaches this accuracy under $8$ epochs, for an odometer budget of $\epsilon=1.45$, an enormous saving. Without adaptation, the model reaches $80\%$ accuracy in $6$ epochs, but under a higher budget of $\epsilon=3.24$.
We also note that the last layer fine-tuning barely suffers from the adaptive policy, yielding large privacy gains.

\section{Related Work}
The study of Differential Privacy composition under adaptively chosen privacy
parameters was initiated in \cite{10.5555/3157096.3157312}, which formalized
privacy odometers and filters for $(\epsilon, \delta)$-DP and showed
constructions with optimal rate.
The overhead of the privacy filter, as well as the limitations when dealing with
larger DP budgets and the Gaussian mechanism have limited the practical impact
of these results.
Some applications exist, but are further tailored to restricted settings
\cite{ligett2017accuracy}, or using basic composition \cite{sage}.
Our filter and odometer alleviate these limitations.

We also build on RDP \cite{8049725}, a relaxation of pure $\epsilon$-DP which
offers strong semantics and enables tighter accounting of privacy composition
under non adaptive privacy parameters.
Due to its more intuitive and tighter accounting, RDP has become a major
building block in DP libraries for deep learning \cite{opacus,
tensorflow-privacy}.

The closest work to ours is \cite{feldman2020individual}, which studies
adaptive budget composition using RDP in the context of individual privacy
accounting.
While our filter result is identical to theirs, we then leverage it in a fully
adaptive odometer construction.
We use a nested filter design more efficient when packing RDP queries
into the filters, and exponentially growing filters to match the optimial
compostion rate from \cite{10.5555/3157096.3157312}, Theorem 6.1.

\section{Conclusion}

In this paper, we analyse DP composition with adaptively chosen privacy parameters through the lens of RDP.
We show that privacy filters, which allow adaptive privacy budget within a fixed privacy loss upper-bound, do not incur any overhead compared to composition of privacy budgets known a priori.
We leverage this privacy filter to construct a privacy odometer, which gives a running upper-bound on the privacy loss incurred under adaptive privacy budgets, when the sequence of DP computations can stop at any time.
This construction, and its leverage of RDP composition, enables efficient odometers in the large budget regime.

We demonstrate through experiments that these practical privacy filter and odometer enable new DP algorithms, in particular for deep learning.
By adapting the DP budget allocated to each gradient step and stopping training early, we improve accuracy under a fixed privacy loss bound, and reduce the total privacy loss incurred when fine-tuning a model.
We hope that these results will motivate broader applications, from the design of new DP algorithms, to practical parameter tuning and architecture search for DP machine learning models.

\bibliography{bib/paper}

\begin{thebibliography}{13}
\providecommand{\natexlab}[1]{#1}
\providecommand{\url}[1]{\texttt{#1}}
\expandafter\ifx\csname urlstyle\endcsname\relax
  \providecommand{\doi}[1]{doi: #1}\else
  \providecommand{\doi}{doi: \begingroup \urlstyle{rm}\Url}\fi

\bibitem[Abadi et~al.(2016)Abadi, Chu, Goodfellow, McMahan, Mironov, Talwar,
  and Zhang]{abadi2016deep}
Abadi, M., Chu, A., Goodfellow, I., McMahan, H.~B., Mironov, I., Talwar, K.,
  and Zhang, L.
\newblock Deep learning with differential privacy.
\newblock In \emph{{CCS}}, 2016.

\bibitem[Dwork et~al.(2014)Dwork, Roth, et~al.]{dp-book}
Dwork, C., Roth, A., et~al.
\newblock The algorithmic foundations of differential privacy.
\newblock \emph{Foundations and Trends{\textregistered} in Theoretical Computer
  Science}, 2014.

\bibitem[Feldman \& Zrnic(2020)Feldman and Zrnic]{feldman2020individual}
Feldman, V. and Zrnic, T.
\newblock {Individual Privacy Accounting via a Renyi Filter}.
\newblock In \emph{{arXiv}}, 2020.

\bibitem[{Google Differential Privacy}()]{google-dp}
{Google Differential Privacy}.
\newblock \url{https://github.com/google/differential-privacy/}.

\bibitem[{IBM Differential Privacy Library}()]{diffprivlib}
{IBM Differential Privacy Library}.
\newblock \url{https://diffprivlib.readthedocs.io/en/latest/}.

\bibitem[Krizhevsky(2009)]{cifar-dataset}
Krizhevsky, A.
\newblock Learning multiple layers of features from tiny images.
\newblock 2009.

\bibitem[L\'{e}cuyer et~al.(2019)L\'{e}cuyer, Spahn, Vodrahalli, Geambasu, and
  Hsu]{sage}
L\'{e}cuyer, M., Spahn, R., Vodrahalli, K., Geambasu, R., and Hsu, D.
\newblock {Privacy Accounting and Quality Control in the Sage Differentially
  Private ML Platform}.
\newblock In \emph{{SOSP}}, 2019.

\bibitem[Ligett et~al.(2017)Ligett, Neel, Roth, Waggoner, and
  Wu]{ligett2017accuracy}
Ligett, K., Neel, S., Roth, A., Waggoner, B., and Wu, S.~Z.
\newblock Accuracy first: Selecting a differential privacy level for accuracy
  constrained erm.
\newblock In \emph{{NeurIPS}}, 2017.

\bibitem[{Mironov}(2017)]{8049725}
{Mironov}, I.
\newblock {R\'enyi Differential Privacy}.
\newblock In \emph{Computer Security Foundations Symposium (CSF)}, 2017.

\bibitem[Opacus()]{opacus}
Opacus.
\newblock {Train PyTorch models with Differential Privacy}.
\newblock \url{https://opacus.ai/}.

\bibitem[{OpenDP}()]{opendp}
{OpenDP}.
\newblock \url{https://smartnoise.org/}.

\bibitem[Rogers et~al.(2016)Rogers, Roth, Ullman, and
  Vadhan]{10.5555/3157096.3157312}
Rogers, R., Roth, A., Ullman, J., and Vadhan, S.
\newblock {Privacy Odometers and Filters: Pay-as-You-Go Composition}.
\newblock In \emph{{NeurIPS}}, 2016.

\bibitem[{TensorFlow Privacy}()]{tensorflow-privacy}
{TensorFlow Privacy}.
\newblock \url{https://github.com/tensorflow/privacy}.

\end{thebibliography}
\bibliographystyle{icml2021}

\end{document}